\def\eqref#1{equation~\ref{#1}}
\def\1{\bm{1}}
\def\vk{{\bm{k}}}
\def\vm{{\bm{m}}}
\def\vo{{\bm{o}}}
\def\vp{{\bm{p}}}
\def\vq{{\bm{q}}}
\def\vs{{\bm{s}}}
\def\vu{{\bm{u}}}
\def\vv{{\bm{v}}}
\def\vw{{\bm{w}}}
\def\vx{{\bm{x}}}
\def\vz{{\bm{z}}}
\DeclareMathAlphabet{\mathsfit}{\encodingdefault}{\sfdefault}{m}{sl}
\SetMathAlphabet{\mathsfit}{bold}{\encodingdefault}{\sfdefault}{bx}{n}
\def\sR{{\mathbb{R}}}
\DeclareMathOperator*{\argmax}{arg\,max}
\DeclareMathOperator*{\argmin}{arg\,min}
\theoremstyle{plain}
\newtheorem{theorem}{Theorem}[section]
\newtheorem{proposition}[theorem]{Proposition}
\theoremstyle{definition}
\theoremstyle{remark}
\title{Sparser is Faster and Less is More: Efficient Sparse Attention for Long-Range Transformers}
\author{%
Chao Lou$^{1}$ \quad Zixia Jia$^{2}$ \quad Zilong Zheng$^{2,*}$ \quad Kewei Tu$^{1,}$\thanks{Corresponding Author} \\
$^{1}$ ShanghaiTech University \\
$^{2}$ National Key Laboratory of General Artificial Intelligence, BIGAI \\
\texttt{\{louchao,tukw\}@shanghaitech.edu.cn}  \\
\texttt{\{jiazixia,zlzheng\}@bigai.ai} \\
}
\begin{document}

\maketitle

\begin{abstract}
    Accommodating long sequences efficiently in autoregressive Transformers, especially within an extended context window, poses significant challenges due to the quadratic computational complexity and substantial KV memory requirements inherent in self-attention mechanisms. In this work, we introduce $\sparsek$ Attention, a novel sparse attention mechanism designed to overcome these computational and memory obstacles while maintaining performance. Our approach integrates a scoring network and a differentiable top-k mask operator, $\sparsek$, to select a constant number of KV pairs for each query, thereby enabling gradient-based optimization. As a result, $\sparsek$ Attention offers linear time complexity and constant memory footprint during generation. Experimental results reveal that $\sparsek$ Attention outperforms previous sparse attention methods and provides significant speed improvements during both training and inference, particularly in language modeling and downstream tasks. Furthermore, our method can be seamlessly integrated into pre-trained Large Language Models (LLMs) with minimal fine-tuning, offering a practical solution for effectively managing long-range dependencies in diverse applications. Our code will be publicly available.
\end{abstract}

\section{Introduction}
\label{sec:intro}

Transformer models~\cite{Vaswani2017AttentionIA} have been considered as a \textit{de facto} backbone of modeling arbitrary sequences, pretraining foundation models~\cite{bommasani2021opportunities,devlin2018bert}, and more recently, constructing large language models~(LLMs) ~\cite{brown2020language,achiam2023gpt}. Despite the inspiring success of their wide applications on both Natural Language Processing (NLP) and Machine Learning (ML) downstream tasks, extending the context window size to long sequences with computation and  memory
 efficiently poses significant challenges~\cite{Ainslie2023CoLT5FL, Dao2022FlashAttentionFA, Dao2023FlashAttention2FA}, owing to the quadratic computation complexity and large amounts of key/value vectors associated with self-attention, especially on resource-constrained devices.

Many recent studies resort to developing learnable sparse and memory-efficient forms of attention to scale to large sequence lengths.
However, applying traditional learnable sparse attention methods to long-range Transformer decoders suffers from two major bottlenecks:
(i) Previous studies usually overlook the memory cost of \textbf{fully memorizing Key-Value (KV) pairs}. Clustering-based methods~\cite{Kitaev2020ReformerTE,Roy2020EfficientCS} allow queries to attend to different sets of KV pairs. In such methods, KV embeddings are required to be fully stored in memory to avoid repetitive computation, which leads to huge memory redundancy and inefficiency when it comes to long-range inference~\cite{Zhang2023H2OHO,Liu2023ScissorhandsET,Yu2023TRAMSTM}.
(ii) Previous learnable sparse attention often has \textbf{super-linear complexity}, especially during training. For example, clustering-based methods usually cost $O(n\log n)$ to maintain clusters. \citet{Ainslie2023CoLT5FL} incorporates a $\textsc{SoftTopK}$ operator~\cite{Lei2023ConditionalAP} to compute soft masks in Transformer encoders. Meanwhile, migrating $\textsc{SoftTopK}$ to Transformer decoders is less advantageous because solving $\textsc{SoftTopK}$ for variable-length context associated with different queries requires quadratic time in total.

To tackle the aforementioned barriers, we propose \textit{SparseK Attention}, an innovative technique that achieves both computational and memory efficiency for training and inference-time attention computing in Transformer decoders, as depicted in Figure~\ref{fig:main}. Within a self-attention module, our method incorporates (1) a scoring network evaluating the importance of each KV pair without accessing the queries that possibly attend to it, and (2) a novel differentiable top-$k$ mask operator $\sparsek$, which normalizes scores to a soft mask (or gates) in linear time. 
% Notably, our design of selective computing and memorizing is in line with human cognitive process: human decision-making often hinges on selectively utilizing key memories rather than exhaustive details~\cite{Minxha2020Flexible}. 
It is worth noting that our method draws inspiration from the concept of top-$k$ attention~\cite{Gupta2021MemoryefficientTV,Ainslie2023CoLT5FL}. Unfortunately, conventional top-$k$ attention is non-differentiable and therefore cannot be used to train the scoring network.
With thorough comparisons with prior sparse attention learning approaches, we highlight the main advantages of $\sparsek$ attention as follows.

\textbf{Incremental KV Selection.}

The $\sparsek$ operator ($\S$~\ref{sec:sparsek}) supports incremental evaluation and thus has a linear complexity in the decoder.
Besides, compared with $\textsc{SoftTopK}$ that performs iterative approximation as in CoLT5~\cite{Ainslie2023CoLT5FL}, our operator computes the exact operation results.

\textbf{Computational and Memory Efficiency.} $\sparsek$ reduces the quadratic training-time complexity of previous learnable sparse attention methods~\cite{Sukhbaatar2019AdaptiveAS,Gupta2021MemoryefficientTV,Anagnostidis2023DynamicCP,Mohtashami2023LandmarkAR} to linear time and achieves constant memory cost in inference.
This improvement of training-time complexity is achieved by the efficiency of KV selection and applying the same level of sparsity in training as in inference.
Additionally, the query-independence of our scoring network guarantees the irreversibility of masking out key-value pairs. This ensures memory efficiency at inference time, allowing for the safe removal of masked key-value pairs from memory immediately~($\S$~\ref{sec:kv_selection}).

\textbf{Extension with IO-awareness.} FlashAttention~\cite{Dao2022FlashAttentionFA} is a widely adopted optimization for accelerating LLMs with IO-awareness. However, the sparsity learned through our method presents a complex memory access pattern, hindering its direct application. To address this, we develop a Triton kernel that fuses the computation of attention and the selection of proper key-value pairs. Our implementation exhibits linear complexity and surpasses FlashAttention in performance when handling 4096 input tokens, of which 1024 key-value pairs are selected for each query. Additionally, we offer a kernel for the backward pass, which fuses the computation of the gradient of $\sparsek$ and others, resulting in increased speed and improved memory efficiency.

\begin{figure}[t]
    \centering
    \includegraphics[width=\textwidth]{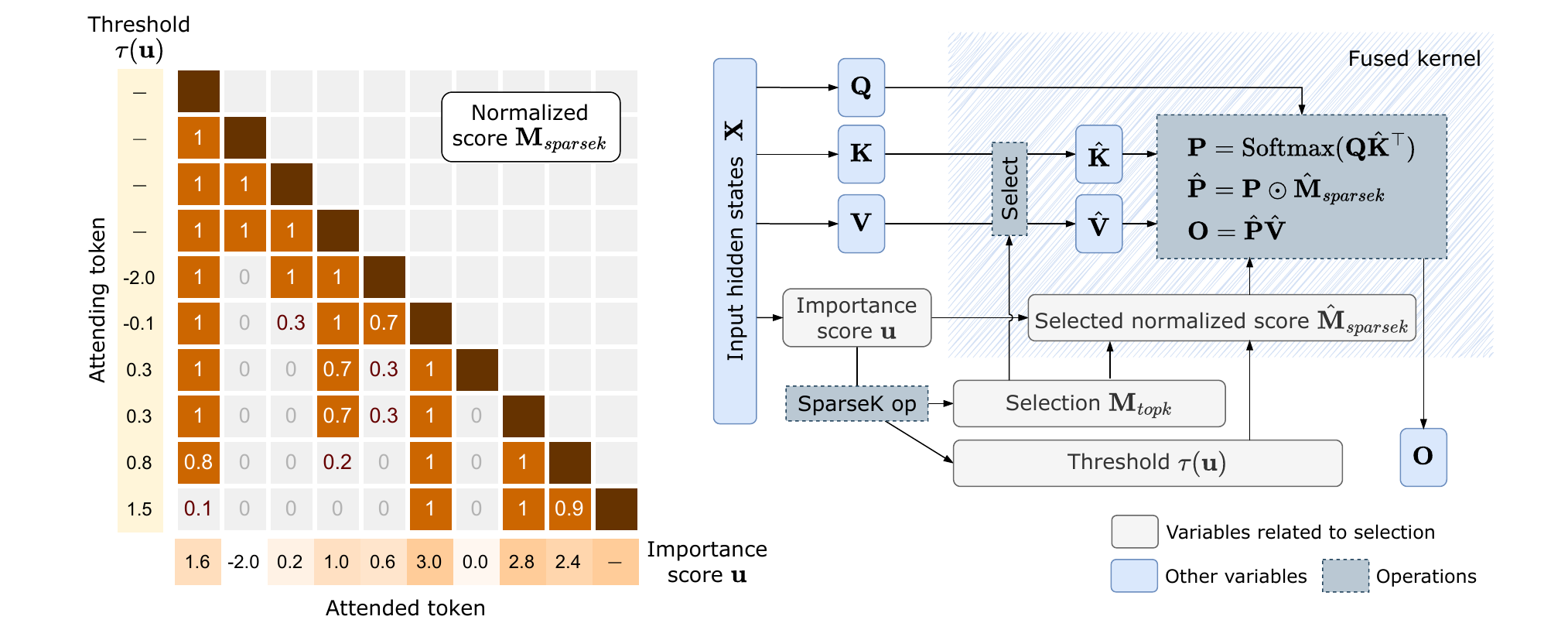}
    \caption{\textbf{Left: $\sparsek$ operation in the attention module.} KV pairs are scored by $\mathbf{u}$. $\sparsek$ computes a threshold for each query ($\tau(\mathbf{u})$) such that the sum of normalized scores is $k$, which is 3 in this example. We select top-$k$ KV pairs (orange cells) to perform attention. \textbf{Right: the $\sparsek$ attention module.} We fuse selection and attention in one kernel for efficiency. }
    \label{fig:main}
\end{figure}

We verify the advantages of $\sparsek$ attention by replacing full attention in various models (such as GPT2~\cite{radford2019language} and Pythia~\cite{biderman2023pythia}) with it and other efficient attention methods. We consider a wide range of settings, including training from scratch and fine-tuning pretrained models. Experiments on language modeling and downstream tasks demonstrate that, when matching the context size, our method outperforms other efficient attention methods consistently while providing promising speed-up at training compared to full attention.

\section{Related Work}

\paragraph{Long-range Transformers}
\label{sec:eff_trfms}

Self-attention is a cornerstone of Transformer success, but its quadratic complexity concerning input length poses challenges for tasks requiring long context. Numerous efficient approaches have emerged, spanning state-space models~\cite{Gu2021EfficientlyML,Smith2022SimplifiedSS}, recurrent neural networks~\cite{Martin2017ParallelizingLR,Peng2023RWKVRR,Orvieto2023ResurrectingRN}, linear attention~\cite{qin2024transnormerllm,Katharopoulos2020TransformersAR} and low-rank approximations of self-attention~\cite{Wang2020LinformerSW,Choromanski2020RethinkingAW,Peng2021RandomFA}, which replace the self-attention with novel linear blocks for long-context modeling. Nonetheless, these approaches historically underperformed compared to modern Transformer models~\cite{Touvron2023LLaMAOA} in language modeling tasks until recent efforts~\cite{Gu2023MambaLS,Yang2023GatedLA}. Besides, a few studies combine the Transformer with block-wise recurrence~\cite{Dai2019TransformerXLAL,Hua2022TransformerQI,Hutchins2022BlockRecurrentT,Chevalier2023AdaptingLM} or key-value compression~\cite{Ren2021CombinerFA,Rae2019CompressiveTF,Dai2020FunnelTransformerFO}.
In contrast, our approach falls under sparse attention, reducing complexity by pruning the attention matrix. This approach is motivated by observations that the attention matrix in dense models naturally becomes sparse, and the performance of language models remains robust under reasonably sparse conditions~\cite{Clark2019WhatDB,Ge2023ModelTY,Liu2023ScissorhandsET}.

\paragraph{Sparse attention}

Some sparse attention utilized fixed patterns to restrict the number of tokens involved, such as sliding windows~\cite{Qiu2019BlockwiseSF,Parmar2018ImageT}, dilated sliding windows~\cite{Beltagy2020LongformerTL,Ding2023LongNetST}, combination of patterns~\cite{Ho2019AxialAI,Child2019GeneratingLS}, or domain-specific patterns~\cite{Guo2023LongCoderAL}.
Recent studies have aimed at achieving constant memory costs during inference through predefined heuristic cache eviction policies~\cite{Zhang2023H2OHO,Liu2023ScissorhandsET,Ge2023ModelTY}. However, these static methods often prove suboptimal in various scenarios~\cite{Sun2021DoLL,Anagnostidis2023DynamicCP}. Alternatively, sparse patterns can be learned in a data-driven manner. For example, Reformer~\cite{Kitaev2020ReformerTE} employs locality-sensitive hashing for token clustering and do attention within a cluster, while Routing Transformers~\cite{Roy2020EfficientCS}, Cluster-Former~\cite{Wang2020ClusterFormerCS} and Clustered Attention~\cite{Vyas2020FastTW} use K-Means clustering on tokens. Besides, Sparse Sinkhorn Attention~\cite{Tay2020SparseSA} establishes sparsity by sorting blocks of inputs. Despite achieving sub-quadratic complexity, these methods still remain above linear complexity and face challenges when handling extremely long sequences or failing to offer constant memory cost during inference. A recent approach by \citet{Anagnostidis2023DynamicCP} introduces a learnable, irreversible key-value pair pruning for inference-time memory efficiency with the concept of relaxing pruning actions to accumulated gating. However, this method still suffers from quadratic complexity during training, hindering its ability to expedite the training process. In this paper, we present a novel, efficient sparse attention mechanism with learnable patterns, addressing all the aforementioned challenges.

\section{SparseK Attention}

\subsection{Background}
\paragraph{Self-Attention}

Given a sequence of vectors $\bX\in\sR^{n\times d}$ where $n$ is the sequence length and $d$ is the hidden dimension, an attention head first projects $\bX$ into query, key and value vectors with $\bW_{Q},\bW_{K},\bW_{V}\in \sR^{d\times p}$ where $p = \frac{d}{h}$ and $h$ is the number of attention heads:
\begin{align}
    \bQ=\bX \bW_{Q} &  & \bK=\bX \bW_{K} &  & \bV=\bX \bW_{V},
\end{align}
In the decoder-only architecture~\cite{Vaswani2017AttentionIA}, a causal attention mask $\bM$ guarantees each query $q_i$ only attends to positions $\le i$. Consequently, the output $\bO$ of single-head dot-product attention is defined as
\begin{align}
    \bS = \bQ\bK^\top \qquad  \bP = \textsc{SoftMax}(\bS + \bM) \qquad \bO = \bP\bV
\end{align}
The multi-head self-attention concatenates the outputs of multiple heads (indexed by subscripts) and applies a linear projection with $\bW_O\in \sR^{d\times d}$:
\begin{align}
    \text{MHA}(\bX) = \text{Concatenate}(\bO_1, \bO_2, \dots, \bO_h)\bW_O
\end{align}
The quadratic complexity of self-attention is contributed by the quadratically sized attention weight $\bS$.  Inspired by~\citet{Ainslie2023CoLT5FL}, we propose to select a constant number of key-value pairs for each query in an irreversible way (defined formally in the following subsections \ref{sec:kv_selection} and \ref{sec:sparsek}), leading to linear training complexity and a constant inference-time memory cost. For simplicity, here we omit the RoPE position embedding~\cite{Su2021RoFormerET} and focus on single-head attention to illustrate our methodology. The multi-head case is briefly discussed in Appendix~\ref{sec:multi_head_selection}.

\paragraph{SparseMax operator}
There are many popular technical choices that relax $\textsc{ArgMax}$ operation, such as $\textsc{SoftMax}$ and $\textsc{SparseMax}$~\citep{Martins2016FromST}.
Especially, $\textsc{SparseMax}$ uses the Euclidean projection onto the probabilistic simplex and tends to yield sparse solutions:
\begin{align}
    \textsc{SparseMax}(\vz)
     & \coloneqq \argmin_{\vp\in \triangle^{m-1}} || \vp-\vz ||^2,\label{eq:sparsemax}
\end{align}
where $\triangle^{m-1}=\{\vp\in \sR^m | \bm{1}^\top \vp = 1, \vp\ge0\}$.
Building on this, we introduce $\sparsek$, an extension of $\textsc{SparseMax}$ for the case where $k=\bm{1}^\top \vp\ge1$.

\subsection{Learnable Key-Value Pair Selection}\label{sec:kv_selection}

\paragraph{Key-value pair selection} We use $\Delta\in \{0,1\}^{k\times m}$ to represent the selection of $k$ key-value pairs out of $m$ entries, where $\Delta(i,j)=1$ indicates that the $j$-th key-value pair is the $i$-th selected entry ( \textit{i.e.}, the $j$-th key-value pair is positioned in the $i$-th slot after sorting), and $\Delta(i,j)=0$ otherwise. It is noteworthy that each column vector and row vector exhibit one-hot characteristics. We use subscripts to distinguish selection corresponding to different queries, i.e., $\Delta_t$ for $\vq_t$. The causality of Transformer decoders puts a natural constraint: $\Delta_t(i,j)=0$ if $j>t$. Then, the self-attention with query $\vq_i$ and its selected contexts are defined as
\begin{align}
    \begin{aligned}
     & \vq_i=\bW_Q\vx_i \qquad \hat{\bK}_i = \Delta_{i} \bK \qquad  \hat{\bV}_i = \Delta_{i} \bV                                                             &  \\
     & \hat{\vs}_i=\vq_i^\top\hat{\bK}_i  \quad  \vo_i = \hat{\vp}_i^\top\hat{\bV}_i  \quad \hat{\vp}_i = \textsc{SoftMax}(\hat{\vs}_i), &
    \end{aligned}\label{eq:naive_imp}
\end{align}

\paragraph{Irreversibile selection} In a series of decoder steps, the selection of key-value pairs for queries $\{\vq_1, \vq_2, \dots, \vq_n\}$ is performed over incremental candidate sets (e.g., $\{\vk_1\}, \{\vk_1, \vk_2\}, \dots, \{\vk_i\}_{i=1}^n$). Irreversible selection is a strategy that, at step $t$,  $\vk_i$ ($i<t$) can be selected only if it has been selected in all preceding steps from $i$ to $t-1$, $\forall t' \in \{i, i+1, \dots, t-1\}, \exists j: \Delta_{t'}(j, i) = 1.$
Irreversible selection is memory-efficient because it guarantees the feasibility of pruning a key-value pair once it is not selected at a step. Many previous studies, such as clustering-based methods~\cite{Roy2020EfficientCS,Kitaev2020ReformerTE}, put no constraint on interactions between queries and key-value pairs and thus are not irreversible, resulting in a memory cost equivalent to that of pure self-attention.

We employ a simple selection strategy in this paper: we score each key-value pair and choose the key-value pairs with the top-$k$ scores. This strategy is irreversible because of the monotonic increase of the $k$-th largest value in incremental sets. Note that we do not use attention scores as the selection criteria~\cite{Gupta2021MemoryefficientTV} to avoid the quadratic cost of computing $\bQ\bK^\top$.

\paragraph{Learnable selection}
Formally, we use a scoring network, a linear projection with parameter $\vw_{score}\in \sR^{d}$ augmented with a position slope $d(\vu) \in \sR^{n}$, to get the importance scores $\vu$ of the key-value pairs computed from the input vectors $\bX$:
\begin{align}
    \vu = \bX\vw_{score} + d(\vu), \qquad\text{where}\quad d(\vu)_i = i\epsilon.\label{eq:compute_u}
\end{align}
for some small positive scalar $\epsilon$.   The introduction of distance slope is to avoid numerical explosion of $\bX\vw_{score}$: if without it, the scoring network would be compelled to predict increasingly larger scores to retain new tokens and discard old ones. This hurts training stabilitiy and generalization ability to larger context length. 

Then, the selection matrix $\Delta_t$ for query $q_t$ can be defined to select top-$k$ entries,
\begin{align}
    \begin{aligned}
     & \vm_{topk} = \textsc{TopK}(\vu_{1:i}, k)         \\
     & \Delta_t^{hard} = \text{MaskSelect}(\text{Diag}(\vm_{topk}), \vm_{topk}),
    \end{aligned} \label{eq:selection}
\end{align}
where $\vm_{topk}$ is an indicator vector, i.e., $\vm_{topk}(j)=1$ if $u_j$ ranks within the top-$k$ of $\vu_{1:i}=\{u_1, u_2, \dots, u_i\}$ and $\vm_{topk}(j)=0$ otherwise, and $\text{Diag}(\vm)$ is a matrix with $\vm$ in the main diagonal. The function $\text{MaskSelect}(\bX, \vm)$ selects rows of $\bX$ according to the mask $\vm$.

$\textsc{TopK}$ is not differentiable with respect to its input, preventing us from updating $\vw_{score}$ with gradient-based methods. Therefore, we propose to use $\sparsek$, a differentiable relaxation of $\textsc{TopK}$, in Equation (\ref{eq:selection}),
\begin{align}
    \begin{aligned}
     & \vm_{sparsek} = \sparsek(\vu_{1:i},k) \\
     & \Delta_t^{soft} = \text{MaskSelect}(\text{Diag}(\vm_{sparsek}), \vm_{topk}),
    \end{aligned} \label{eq:sparse_k_selection}
\end{align}

\subsection{The Differentiable SparseK Operator}\label{sec:sparsek}

\paragraph{Definition}

We relax the constraint in Equation (\ref{eq:sparsemax}) from a probablistic simplex to a k-sum constraint $\mathbb{C} = \{\vp | \bm{0} \le \vp \le \1,  \1^\top\vp=k\}$ and define $\sparsek$ as follows,
\begin{align}
    \sparsek(\vz, k) & \coloneqq \argmin_{\vp\in \mathbb{C}}|| \vp-\vz ||^2\label{eq:sparsek_def} \\
                     & = \argmax_{\vp\in \mathbb{C}} \vp^\top \vz + H^G(\vp),
\end{align}
where $H^G(\vp) = \frac{1}{2}\sum_j p_j(1-p_j)$ is the generalized Gini entropy for $\vp \in \mathbb{C} $ instead of a distribution.

$\sparsek$ is related to SoftTopK as explained below. The generalized Gini entropy is a special case of the generalized $\alpha$-Tsallis entropy $H^T(\vp)$ for $\alpha=2$~\cite{tsallis_possible_1988}, where $H^T(\vp) = \frac{1}{\alpha(\alpha - 1)} \sum_j (p_j - p_j^\alpha)$ for $\vp \in \mathbb{C}$. Then we can define the generalized $\alpha\textsc{-EntTopK}$ operator as
\begin{align}
    \alpha\textsc{-EntTopK} = \argmax_{\vp\in \mathbb{C}} \vp^\top \vz + H^T(\vp).
\end{align}
The entropic index $\alpha$ controls the smoothness of the solution. Taking the limit of $\alpha\rightarrow \infty$, we get the $\textsc{TopK}$ operator. Taking the limit of $\alpha\rightarrow 1$, the generalized $\alpha$-Tsallis entropy becomes the generalized Gibbs-Boltzmann-Shannon entropy and we obtain the $\textsc{SoftTopK}$ operator~\cite{Lei2023ConditionalAP}.

\paragraph{Solution} Similar to $\textsc{SparseMax}$, $\textsc{SparseK}$ is a soft-thresholding operation. Its solution is expressed as follows:
\begin{align}
    \vp^* &= \max(\min(\vz - \tau(\vz), \bm{1}), \bm{0}),\label{eq:sparsek_sol_main} \\
    \tau(\vz) & = \frac{\sum_{u^* < j \le w^*} z_{(j)} + u^* - k}{ w^* - u^* },\label{eq:run_sol}
\end{align}
where $\tau(\vz): \sR^n \rightarrow \sR$ is the threshold function that satisfies $\sum \vp^* = k$, $z_{(1)} > z_{(2)} > \dots > z_{(m)}$ is the sorted coordinates of $\vz$, $u^*$ is the number of entries with value 1 in $\vp^*$, and $w^*$ is the number of entries with nonzero values. 
Algorithm~\ref{alg:one_step} illustrates an $O(m\log m)$ algorithm for evaluating $\sparsek$. With the pre-computed cumulative sum of $\vz$, line 5 can be evaluated in $O(1)$, and thus the overall complexity is primarily due to sorting. We give the proof in Appendix~\ref{sec:proof}. 

\begin{figure}[ht]
    \begin{minipage}[t]{\textwidth}
        \begin{multicols}{2}
            \begin{algorithm}[H]
                \caption{Evaluate $\sparsek(\vz, k)$ }
                \label{alg:one_step}
                \begin{algorithmic}[1]
                    \STATE {\bfseries Input:}  $\vz$
                    \STATE Sort $\vz$ as $z_{(1)} \ge \ldots \ge z_{(m)}$
                    \STATE Pre-compute the cumulative sum of $\vz$
                    \FOR{$(u, w)$ in the descending order}
                    \STATE $\tau \leftarrow$ as in Equation (\ref{eq:run_sol})\label{alg:line_eval_sol}
                    \IF{$z_{(w)} > \tau$ and $z_{(u)} \ge \tau + 1 $ }
                    \STATE \textbf{break}
                    \ENDIF
                    \ENDFOR
                    \STATE $\vp \leftarrow \max(\min(\vz - \tau, \bm{1}), \bm{0})$
                    \STATE {\bfseries Output:}  $\vp$
                \end{algorithmic}
            \end{algorithm}
            \vspace{-0.5cm}
            \setcounter{algorithm}{2}
            \begin{algorithm}[H]
                \caption{Train with chunk-wise recurrency}
                \label{alg:recurrency}
                \begin{algorithmic}[1]
                    \STATE {\bfseries Input:}  $\bX = [\bX_1, \bX_2, \dots, \bX_l]$
                    \STATE Initialize a KV cache with scores
                    \FORALL{$\bX_i$ in $\bX$}
                    \STATE $\bO_i \leftarrow$ attention with $\bX_i$ and the KV cache
                    \STATE Add new KV pairs and scores to the cache
                    \STATE Prune the KV cache to size $k$ by scores
                    \STATE Stop gradients of the KV cache
                    \ENDFOR
                    \STATE {\bfseries Output:}  $[\bO_1, \bO_2, \dots, \bO_l]$
                \end{algorithmic}
            \end{algorithm}
            \columnbreak
            \setcounter{algorithm}{1}
            \begin{algorithm}[H]
                \caption{Evaluate $\sparsek(\vz_{1:t}, k)$ at step $t$ from the result of step $t-1$}
                \label{alg:multi_step}
                \begin{algorithmic}[1]
                    \STATE {\bfseries Input:}  $z_t$, min-heaps $\mathcal{F} = \{z_i | z_i \ge \tau(z_{1:t-1}) +1\}$ and $\mathcal{S} = \{z_i | z_i > \tau(z_{1:t-1})\}$, $\tau_{t-1}$ from step $t-1$

                    \color{NavyBlue}
                    \STATEx \st{Sort $\vz$ as $z_{(1)} \ge \ldots \ge z_{(m)}$}
                    \STATEx \st{Pre-compute the cumulative sum of $\vz$}
                    \IF{$z_t \ge \tau(z_{1:t-1}) +1$}\label{lst:add_start}
                    \STATE Insert $z_t$ into $\mathcal{F}$
                    \ENDIF
                    \IF{$z_t > \tau(z_{1:t-1})$}
                    \STATE Insert $z_t$ into $\mathcal{S}$
                    \ENDIF\label{lst:add_end}
                    \color{black}
                    \FOR{$(u, w)$ in the descending order {\color{NavyBlue}from ($|\mathcal{S}|, |\mathcal{F}|$)}}
                    \STATE $\tau \leftarrow$ as in Equation (\ref{eq:run_sol})\label{alg:line_eval_sol_in_multi}
                    {\color{NavyBlue} \STATE Prune $\mathcal{S}$ and $\mathcal{F}$ with the new $\tau$}
                    \IF{$z_{(w)} > \tau$  and $z_{(u)} \ge \tau + 1 $ }
                    \STATE \textbf{break}
                    \ENDIF
                    \ENDFOR
                    \STATE $\vp \leftarrow \max(\min(\vz - \tau, \bm{1}), \bm{0})$
                    \STATE {\bfseries Output:}  $\vp, \mathcal{S}, \mathcal{F}, \tau$
                \end{algorithmic}
            \end{algorithm}
        \end{multicols}
    \end{minipage}
\end{figure}

Algorithm~\ref{alg:one_step} can be extended to evaluate $\sparsek$ on incremental sets. The idea is that we can compute step $t$ based on the results from step $t-1$ instead of starting from scratch.
Algorithm~\ref{alg:multi_step} illustrates the algorithm, where highlighted lines are the main difference from Algorithm~\ref{alg:one_step}. We introduce two min-heaps (and maintain the sum of elements for each heap) for tracking the search progress of $(u,w)$ and achieving $O(1)$ evaluation of line 9 in Algorithm~\ref{alg:multi_step}. Note that each insertion into a min-heap costs logarithmic time in the heap size and each $z_t$ introduces at most two more possible $(u, w)$ pairs (lines 2-6 in Algorithm~\ref{alg:multi_step}).
Therefore, executing Algorithm~\ref{alg:multi_step} over $m$ incremental sets (\textit{i.e.}, $m$ steps) costs $O(m\log m)$ in total.

As \citet{peters-etal-2019-sparse} have noted, the solution $\vp^*$ tends to contain only a few nonzeros, leading to small $u^*$ and $w^*$. Therefore, in practice, we can use partial sort on the $k'=O(k)$ largest values instead of full sort in Algorithm~\ref{alg:one_step}, thereby achieving a complexity of $O(m\log k)$. With respect to Algorithm~\ref{alg:multi_step}, this change is equivalent to restricting the size of the min-heap $\mathcal{S}$ to an upper bound for achieving the same reduction in complexity.

\subsection{Extensions}

\paragraph{Training with fixed-size truncation-free cache} Our selection method enables training on extremely long documents that need to be segmented into smaller chunks for recurrent processing. Algorithm~\ref{alg:recurrency} illustrates the process. Without introducing any additional truncation strategies or parameters, the algorithm maintains a fixed-size cache benefit by recurrent calculations and produces exactly the same results as calculating without chunking, which is guaranteed by the irreversibility of our selection method. To minimize the memory footprint, we stop the gradients of the cache, thereby pruning the computation graph, as in Transformer-XL~\cite{Dai2019TransformerXLAL}. With this algorithm, we can extend the training context length to hundreds of thousands of tokens.

\paragraph{Combine with other efficient attention mechanism}
\label{sec:combine_sparsek_and_local}

Our $\sparsek$ attention can be combined with other sparse attention as long as they have irreversible selection patterns. In this work, we integrate $\sparsek$ attention with sliding window (SW) attention by default, motivated by the well-known experience that sliding windows are simple yet incredibly strong for language modeling~\cite{rae-razavi-2020-transformers,Jiang2023Mistral7}.  Specifically, given a sliding window size $w$, we replace $\hat{\bK}_i, \hat{\bV}_i$ in (\ref{eq:naive_imp}) with
\begin{align}
    \hat{\bK}_i = \left[\begin{matrix}\Delta_{i-w} \bK \\ \bK_{i-w + 1:i}\end{matrix}\right] \quad
    \hat{\bV}_i = \left[\begin{matrix}\Delta_{i-w} \bV \\ \bV_{i-w + 1:i}\end{matrix}\right],
\end{align}
This combination does not introduce any overhead thanks to our fused Triton kernel. In this combination, $\sparsek$ attention attention aims at efficiently global (long-range) dependencies modeling, while SW attention is used for modeling local dependencies. 

Besides, $\sparsek$ attention can also be combined with linear attention methods, which hypothesize the existence of low-rank structures in attention scores rather than sparsity. From a theoretical perspective, \citet{Chen2021ScatterbrainUS} reveal that linear attention and sparse attention capture different attention patterns, and their combination provides a closer approximation to full attention. In this work, we extend their results to $\sparsek$ attention and recent attention optimizations~\cite{Dao2023FlashAttention2FA}. For technical details, please refer to Appendix~\ref{sec:combine_with_linear_attn}.

\paragraph{Straight-through estimator} From $\textsc{TopK}$ to $\sparsek$, we employ relaxation techniques to facilitate gradient-based training. Alternatively, the straight-through estimator (ST)~\cite{Bengio2013EstimatingOP} can be utilized, i.e., $\Delta^{st} = \Delta^{soft} - \text{stop\_grad}(\Delta^{soft}) + \Delta^{hard}$, allowing the model to perform true selection. By utilizing the ST method, the model achieves slightly improved efficiency since it bypasses the multiplication of selection scores during the forward pass. Our experimental results indicate that employing ST results in negligible performance degradation. Consequently, this technique shows promise in balancing performance and computational efficiency.

\subsection{Techniques for Faster and More Stable Training} 
\label{sec:faster_and_more_stable_training}

We introduce three beneficial modeling tricks discovered in our experiments. We also develop an optimized implementation based on FlashAttention-2\footnote{\url{https://github.com/openai/triton/blob/main/python/tutorials/06-fused-attention.py}} for obtaining practically efficient sparse attention. Please refer to Appendix~\ref{seq:eff_impl} for details.

\paragraph{Score normalization} We add timestep normalization~\cite{ma2024megalodon} on the time dimension: $\mathbf{u}^\prime = \text{TimestepNorm}(\mathbf{u})$, which computes cumulative mean and variance for each timestep. Note that the gradient of $\sparsek$ operation is sparse due to thresholding. We hypothesize that the additional normalization enables gradients in every position, thus resulting in better training. Moreover, timestep normalization is necessary to avoid numerical explosion when combining $\sparsek$ attention with linear attention. More discussion is in Appendix~\ref{sec:combine_with_linear_attn}.

\paragraph{Hard selection for keys and soft selection for values} Recall that the $\sparsek$ attention is given by $\textsc{SoftMax}(\bQ(\Delta^{soft}\bK)^\top)(\Delta^{soft}\bV)$. We have found it generally beneficial to use $\textsc{SoftMax}(\bQ(\Delta^{hard}\bK)^\top)(\Delta^{soft}\bV)$ instead, especially when fine-tuning pretrained models. This preference arises because the gradient of $\Delta^{soft}$ via $\Delta^{soft}\bK$ can be problematic given that, in pretrained models, some entries of $\bQ\bK^\top$ can be very close to the limitations of the \texttt{bfloat16} data type, whereas $\bV$ and the output of $\textsc{SoftMax}$ are generally stable.

\paragraph{Initialization to mimic attention score} When integrating our $\sparsek$ attention into pre-trained models, we hypothesize that an effective initialization for $\vw_{score}$ should ensure that the importance scores $\mathbf{u}$ correspond with cumulative attention scores, preventing key-value pairs that receive heavy attention from being pruned. Inspired by the selection metric proposed in \citet{Yu2023TRAMSTM}, whose ranking has been demonstrated to correlate strongly with the rankings of attention weights, we use $\bW_Q,\bW_K$ within the pretrained model for initialization: $\vw_{score} = \vw^\prime / \|\vw^\prime\|$ where $  \vw^\prime=\bW_Q\bW_K^\top\bm{1}$.

\section{Experiments}\label{sec:exp}

To evaluate the efficiency, scalability, and compatibility of $\sparsek$ attention, we tested it across various model architectures and scales. Our experiments focus on language modeling tasks using the OpenWebText corpus~\cite{Gokaslan2019OpenWeb} and the SlimPajama corpus~\cite{cerebras2023slimpajama}, and downstream tasks in LongBench~\cite{Bai2023LongBenchAB}. Empirically, the $\sparsek$ attention mechanism outperforms all previous efficient attention methods. More results, such as speed benchmark, ablation study and visualization, can be found in Appendix~\ref{sec:additional_results}.

\subsection{Language Modeling from Scratch}

\begin{wraptable}{R}{0.6\textwidth}
    \caption{Perplexity on the OpenWebText held-out set.}
    \label{tab:ppl_from_scratch}
    \centering
    \begin{tabular}{l|ccc} \toprule
       \multirow{ 2}{*}{\textbf{Model}} & \multicolumn{3}{c}{\textbf{Training Context Length}} \\
       & 1024 & 4096 & 8192 \\\midrule
       Full attention & 23.13 & 21.64 & 21.86 \\
       SW & 23.90 & 23.99 & 23.10 \\
       Linear + SW  &  23.27 & 22.97 & 23.23 \\
       Fixed & 23.26 & 22.47 & 22.86 \\
       Random  & 30.77 & 34.49 & 49.76\\
       Hash  & 26.53 & 27.42 & 27.58 \\\midrule
       GLA & 23.29 & 22.36 & 24.15 \\
       RetNet & 24.55 & 24.50 & 26.75 \\\midrule
       $\sparsek$ + SW & 22.85 & 21.98 & 21.84 \\
        $\sparsek$ + Linear + SW  & 22.46 & 21.55 & 21.32 \\\bottomrule
    \end{tabular}
\end{wraptable}

We adopt the GPT-2 small architecture, comprising 124 million parameters~\cite{radford2019language}. Notably, we substitute the original absolute position embeddings with rotary position embeddings as proposed in \citet{Su2021RoFormerET}. We alter the standard full attention in this architecture with our $\sparsek$ attention and several other efficient attention methods for comparison. For sparse attention, we include sliding window attention (SW)~\cite{Qiu2019BlockwiseSF,Parmar2018ImageT}, fixed sparse attention (Fixed)~\cite{Child2019GeneratingLS}, randomized sparse attention (Random)~\cite{Pagliardini2023FasterCA} and hash attention (Hash)~\cite{Kitaev2020ReformerTE,Pagliardini2023FasterCA}.  We adjust sparsity configurations to restrict the context window size to about 256 when the context length is 1024 or 4096 and about 512 when 8192. For linear attention, we utilize the kernelization proposed by \citet{Katharopoulos2020TransformersAR}. Additionally, we compare our methods to recent linear attention works that employ their own architectures rather than the GPT-2 architecture: GLA~\cite{Yang2023GatedLA} and RetNet~\cite{Sun2023RetentiveNA}. As the smallest GLA and RetNet is 340M, We modify their hyperparameters to align with our setting. Detailed hyperparameters and results of other configurations can be found in Appendix~\ref{sec:more_results_from_scratch}. 

We trained all models on the OpenWebText\footnote{\url{https://huggingface.co/datasets/Skylion007/openwebtext}} corpus for 10,000 steps, varying the context length. The results are presented in Table~\ref{tab:ppl_from_scratch}. Our $\sparsek$+SW method consistently outperforms all previously established efficient attention methods. Particularly, $\sparsek$+SW offers superior performance and has lower time complexity compared to previous learnable sparse attention methods, such as hash attention. Furthermore, linear attention methods, such as Linear+SW, GLA, and RetNet, demonstrate limitations, particularly in modeling long contexts. However, when combining linear attention with $\sparsek$ attention, we observed additional performance gains over $\sparsek$+SW, even surpassing full attention. This suggests the potential of exploring a mixture of different attention methods for more efficient modeling.

\subsection{Fine-tuning Existing Models}

\begin{figure}[ht]
    \centering
    \includegraphics[width=0.98\textwidth]{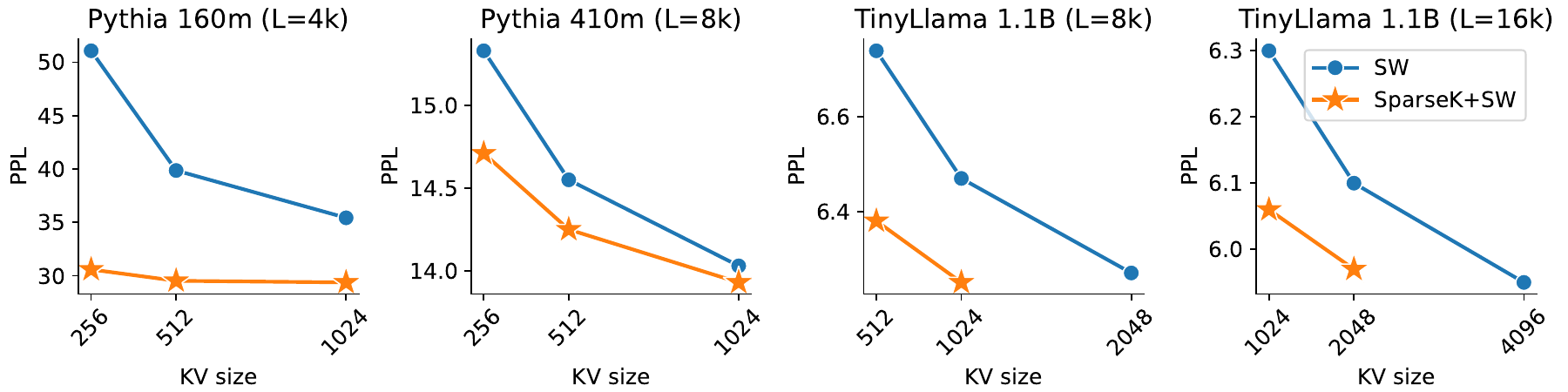}
    \caption{Perplexity on the held-out set of fine-tuned models. L denotes the training context length.}
    \label{fig:ft_ppl}
\end{figure}

We replace the standard full attention in Pythia 160M, Pythia 410M~\cite{biderman2023pythia} and TinyLlama 1.1B ~\cite{zhang2024tinyllama} with our $\sparsek$ attention and sliding window attention. The models are then fine-tuned over a few steps to ensure compatibility with the modified attention modules. Here we only consider sliding window attention because other efficient attention methods often require additional changes of the model architecture and sliding window attention is reported to be efficient in \citet{Chen2023LongLoRAEF}. In fine-tuning, the NTK-aware interpolation~\cite{blocntkaware} is adopted to extend the limit of pretrained positional encodings. For the Pythia models, we utilize a 1\% sampled subset of the SlimPajama dataset\footnote{\url{https://huggingface.co/datasets/DKYoon/SlimPajama-6B}}~\cite{cerebras2023slimpajama} to perform fine-tuning on moderate-length settings (i.e., 4k and 8k). In contrast, we use an upsampled dataset comprising long documents~\cite{Fu2024DataEF} to fine-tune the TinyLlama models on long-length settings (i.e., 8k and 16k). Training hyperparameters are listed in Appendix~\ref{sec:ft_hp}.

In Figure~\ref{fig:ft_ppl}, we report the perplexity on the held-out set across various levels of sparsity and training context lengths. Extending the training context length and increasing the context size generally benefit all types of attention mechanisms. When matching the KV size, our $\sparsek$+SW attention consistently outperforms sliding window attention. For the TinyLlama models, $\sparsek$+SW attention achieves comparable perplexity using only half the KV size required by sliding window attention. These results underscore the advantages of a more adaptable context as implemented in $\sparsek$+SW. We further evaluate TinyLlama 1.1B, fine-tuned with an 8k context window, across additional tasks as presented in the following sections.

\subsection{Retrieval-based Evaluation and Length Extrapolation}

\begin{figure}[ht]
    \centering
     \begin{subfigure}[t]{0.49\textwidth}
        \includegraphics[width=\textwidth]{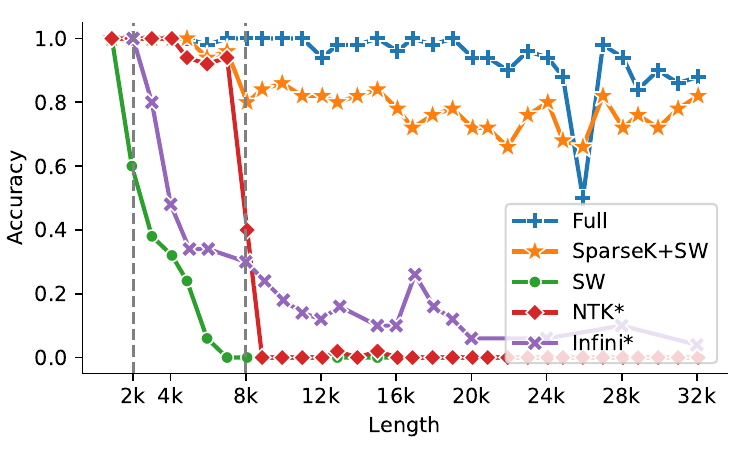}
        \caption{Passkey retrieval accuracy. }
        \label{fig:passkey}
    \end{subfigure}
    \hfill
    \begin{subfigure}[t]{0.49\textwidth}
        \includegraphics[width=\textwidth]{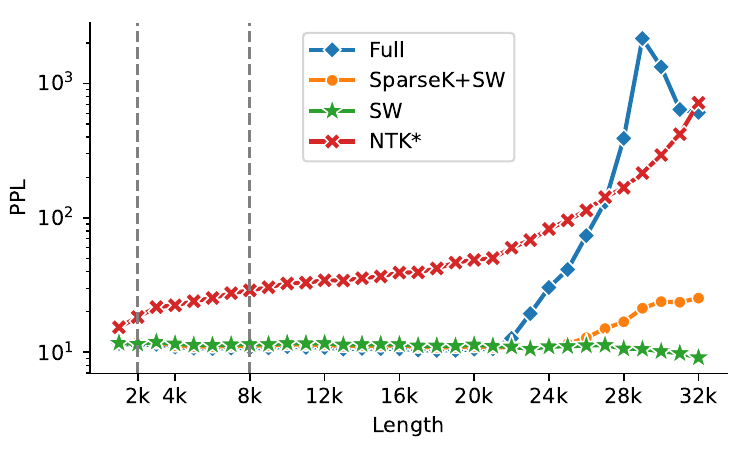}
        \caption{Perplexity of tokens in different position buckets on the PG19 test set. }
        \label{fig:pg19}
    \end{subfigure}
    \caption{Length extrapolation results. * denotes that the method is training-free.  2,048 is the context length of the original model. 8,192 is the context length in fune-tuning.}
\end{figure}

A common concern with sparse attention is its potential to neglect informative history. To investigate this, we evaluated our fine-tuned models on the passkey retrieval task~\cite{Mohtashami2023LandmarkAR}, along with two baseline methods that require no training: dynamic NTK~\cite{blocntkaware,dynamicntk} and LM-Infinite~\cite{han2023lminfinite}. The results are presented in Figure~\ref{fig:passkey}. It is evident that the sliding window approach fails even within the trained context length. Furthermore, among the training-free methods, NTK utilizes full attention and extends the context length by a factor of four, whereas the memory-efficient method LM-Infinite fails in extrapolation. In contrast, $\sparsek$+SW is memory-efficient while maintaining performance for context lengths well beyond four times longer.

We also analyze the perplexity of tokens in various positional buckets within a long context, as depicted in Figure~\ref{fig:pg19}. In the language modeling task, SW demonstrates the ability to effectively manage contexts four times longer than standard models, although it is less competitive in relatively short contexts. While $\sparsek$+SW fails at contexts extending to 26k tokens, it outperforms both NTK and fine-tuned full attention models.

\subsection{Downstream Task}

We evaluated our method on the English subsets of LongBench~\cite{Bai2023LongBenchAB} using the OpenCompass package~\cite{2023opencompass}, which encompasses a wide range of long-context downstream tasks. The choice of language is based on the fact that the training corpus of TinyLlama is primarily in English. We test all models using greedy decoding, with the evaluation context size set to 8192.

All results are presented in Table~\ref{tab:longbench}. Full attention offers the best performance but incurs the highest memory cost. Sliding window attention is memory-efficient; however, it results in significant performance degradation. In contrast, our $\sparsek$+SW attention not only demonstrates strong performance but also achieves high memory efficiency. Notably, $\sparsek$+SW outperforms the training-free method, NTK, and the inference-time KV cache compression method, H2O~\cite{Zhang2023H2OHO}. This suggests the benefits of maintaining consistency between training and inference. However, $\sparsek$+SW underperforms fine-tuned full attention, representing a trade-off between efficiency and performance.

\begin{table}[ht]
    \caption{Results on LongBench. $^*$ denotes that the method is training-free. $^\dagger$ We use 512 globel (heavy-hitter) KV cache and 512 local KV cache in H2O.}
    \label{tab:longbench}
    \vskip 0.15in
    \centering
    \resizebox{0.99\columnwidth}{!}{%
        \setlength{\tabcolsep}{2pt}
        \begin{tabular}{c|ccccccccccccccccc} \toprule
            \multirow{2}{*}{\textbf{Model}} & \multicolumn{3}{c}{\textbf{Single-Doc QA}} & \multicolumn{3}{c}{\textbf{Multi-Doc QA}} & \multicolumn{3}{c}{\textbf{Summarziation}} & \multicolumn{3}{c}{\textbf{Few-shot Learning}} & \multicolumn{2}{c}{\textbf{Synthetic}} & \multicolumn{2}{c}{\textbf{Code}} & \multirow{2}{*}{\textbf{Avg.}}   \\ \cmidrule(l{2pt}r{2pt}){2-4}\cmidrule(l{2pt}r{2pt}){5-7}\cmidrule(l{2pt}r{2pt}){8-10}\cmidrule(l{2pt}r{2pt}){11-13}\cmidrule(l{2pt}r{2pt}){14-15}\cmidrule(l{2pt}r{2pt}){16-17}                                                                      
                           & NQA  & Qspr & MulFi & HQA & WMQA & Musq & GRpt & QMSM & MulN & TREC & TriQA & SMSM & PsgC & PsgR & LCC & Repo & \\\midrule
            \makecell{NTK$^*$ \\ $w=8192$} & 4.34 & 10.30 & 14.54 & 6.49 & 9.19 & 3.49 & 11.77 & 7.84 & 3.62 & 49.5 & 55.17 & 22.66 & 1.21 & 3.38 & 52.19 & 48.90  & 19.04  \\\midrule
            \makecell{Full \\ $w=8192$} & 3.95 & 13.07 & 13.16 & 6.81 & 10.77 & 3.51 & 15.17 & 6.12 & 8.30 & 61.00 & 65.15 & 26.02 & 0.39 & 2.37 & 56.72 & 50.36 & 21.42 \\\midrule
            \makecell{Full \\ H2O$^\dagger$} & 7.66 & 9.33 & 13.73 & 6.36 & 10.23 & 3.26 & 12.10 & 7.00 & 0.87 & 51.00 & 54.92 & 18.31 & 2.39 & 2.62 & 41.66 & 43.24 & 17.79 \\\midrule
            \makecell{SW \\ $w=1024$} & 1.34 & 8.69 & 5.41 & 2.76 & 4.46 & 0.48 & 11.78 & 4.25 & 2.39 & 25.50 & 13.43 & 5.33 & 2.3 & 0.50 & 52.22 & 27.50 & 10.52 \\\midrule
            \makecell{SparseK+SW\\$k=w=512$} & 5.19 & 14.29 & 13.24 & 6.85 & 9.21 & 3.83 & 14.11 & 5.97 & 5.85 & 55.00 & 52.06 & 24.79 & 0.61 & 2.61 & 53.90 & 50.89 & 19.90 \\\bottomrule
        \end{tabular}% 
    }
\end{table}

\section{Conclusion}

We propose $\sparsek$ attention, a new approach to sparse attention that achieves both computational and memory efficiency. Within self-attention, we use an additional scoring network evaluating the importance of each key-value pair and select the top-$k$ pairs. We propose the differentiable $\sparsek$ operator, a relaxation of $\textsc{TopK}$, to enable gradient-based optimization. Experiments on language modeling and downstream tasks demonstrate consistent improvements compared to previous efficient attention methods.

\newpage

{\small
\bibliographystyle{plainnat}
\bibliography{custom,anthology}}

%%%%%%%%%%%%%%%%%%%%%%%%%%%%%%%%%%%%%%%%%%%%%%%%%%%%%%%%%%%%

\appendix

\section{Derivation of SparseK}
\label{sec:proof}

\subsection{The solution of SparseK}

\begin{proposition}
    The solution of $\sparsek\coloneqq \argmin_{\vp\in \mathbb{C}}|| \vp-\vz ||^2$ is in the form of $\vp^* = \max(\min(\vz - \tau(\vz), \bm{1}), \bm{0})$, where $\mathbb{C} = \{\vp | \bm{0} \le \vp \le \1,  \1^\top\vp=k\}$. Define $F^*(\vz) = \{j| p^*_j = 1\}$ and $S^*(\vz) = \{j| 0 < p^*_j < 1\}$. Then,\begin{align}
        \tau(\vz) 
    &= \frac{\sum_{j\in S^*(\vz)}z_j + |F^*(\vz)| - k}{ |S^*(\vz)| }\label{eq:sparse_k_proof_part1}
    \end{align}
\end{proposition}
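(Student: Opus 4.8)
The plan is to treat $\sparsek(\vz,k)=\argmin_{\vp\in\mathbb{C}}\|\vp-\vz\|^2$ as a Euclidean projection onto the compact convex polytope $\mathbb{C}$ and to read the solution off its Karush--Kuhn--Tucker (KKT) conditions. Since $\mathbb{C}=\{\vp\mid \vzero\le\vp\le\vone,\ \vone^\top\vp=k\}$ is nonempty (for $0\le k\le m$), closed, bounded and convex, and the objective is strictly convex, a unique minimizer $\vp^*$ exists and is fully characterized by the KKT system. First I would introduce a scalar multiplier $\tau$ for the equality constraint $\vone^\top\vp=k$ and nonnegative multipliers $\mu_j,\nu_j$ for the box constraints $p_j\ge 0$ and $p_j\le 1$, forming the Lagrangian $L(\vp,\tau,\mu,\nu)=\tfrac12\|\vp-\vz\|^2+\tau(\vone^\top\vp-k)-\sum_j\mu_j p_j+\sum_j\nu_j(p_j-1)$.

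Next I would write stationarity $\partial L/\partial p_j = p_j-z_j+\tau-\mu_j+\nu_j=0$, i.e. $p_j=z_j-\tau+\mu_j-\nu_j$, together with complementary slackness $\mu_j p_j=0$, $\nu_j(p_j-1)=0$ and dual feasibility $\mu_j,\nu_j\ge 0$. A three-way case analysis per coordinate then yields the clamping form: if $0<p_j^*<1$ then $\mu_j=\nu_j=0$ and $p_j^*=z_j-\tau$; if $p_j^*=0$ then $\nu_j=0$ and $\mu_j=\tau-z_j\ge 0$, i.e. $z_j\le\tau$; and if $p_j^*=1$ then $\mu_j=0$ and $\nu_j=z_j-\tau-1\ge 0$, i.e. $z_j\ge\tau+1$. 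Collecting the three cases gives exactly $p_j^*=\max(\min(z_j-\tau,1),0)$, which is (\ref{eq:sparsek_sol_main}) and exhibits $\sparsek$ as the same soft-thresholding as $\textsc{SparseMax}$ but saturating at $1$.

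Having fixed the form of $\vp^*$, I would determine $\tau$ from the one remaining constraint $\vone^\top\vp^*=k$. Partition the coordinates according to the thresholds above: $F^*(\vz)=\{j\mid z_j\ge\tau+1\}$ (where $p_j^*=1$), $S^*(\vz)=\{j\mid \tau<z_j<\tau+1\}$ (where $p_j^*=z_j-\tau$), and the remainder (where $p_j^*=0$). Substituting into $\sum_j p_j^*=k$ gives $|F^*(\vz)|+\sum_{j\in S^*(\vz)}(z_j-\tau)=k$, and solving this linear equation for $\tau$ produces $\tau(\vz)=\big(\sum_{j\in S^*(\vz)}z_j+|F^*(\vz)|-k\big)/|S^*(\vz)|$, which is (\ref{eq:sparse_k_proof_part1}); rewriting $F^*,S^*$ via the sorted order statistics with $u^*=|F^*|$ and $w^*=|F^*|+|S^*|$ recovers the running form (\ref{eq:run_sol}).

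I expect the main obstacle to be the self-referential nature of this last step: the sets $F^*(\vz)$ and $S^*(\vz)$ are themselves defined through $\tau$, so the displayed formula is implicit and one must argue that a consistent threshold exists and is unique. I would resolve this by a standard monotonicity argument: the clamped sum $g(t)=\sum_j\max(\min(z_j-t,1),0)$ is continuous and nonincreasing in $t$, with $g(t)\to m$ as $t\to-\infty$ and $g(t)\to 0$ as $t\to+\infty$, so for $0\le k\le m$ there is a threshold $\tau$ with $g(\tau)=k$; this simultaneously pins down the membership of each coordinate in $F^*$ or $S^*$ and justifies scanning candidate pairs $(u,w)$ as in Algorithm~\ref{alg:one_step}. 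A minor edge case to handle is $S^*(\vz)=\emptyset$ (all active coordinates saturate), where the denominator vanishes and $\tau$ is nonunique; there $\vp^*$ is still unique and any $\tau$ in the corresponding interval yields the correct clamped solution, consistent with the guard conditions $z_{(w)}>\tau$ and $z_{(u)}\ge\tau+1$ in the algorithm.
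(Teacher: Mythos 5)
Your proposal is correct and follows essentially the same route as the paper's proof: form the Lagrangian with multipliers for the box and sum constraints, derive the clamped form $p_j^*=\max(\min(z_j-\tau,1),0)$ from a three-way KKT case analysis, and solve $\vone^\top\vp^*=k$ for $\tau$. Your additional monotonicity argument for the existence and consistency of the threshold (and the $S^*=\emptyset$ edge case) is a worthwhile bit of extra care that the paper leaves implicit, but it does not change the approach.
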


\begin{proof}
Consider the Lagrangian problem of $\sparsek$:
\begin{equation}
\mathcal{L}(\vz, \bm{\mu}, \bm{\nu}, \tau) = \frac{1}{2} ||\vp-\vz ||^2 - \bm{\mu}^\top\vp + \bm{\nu}^\top(\vp-\1) + \tau (\1^\top\vp - k)
\end{equation}

The optimal ($\vp^*, \bm{\mu}^*, \bm{\nu}^*, \tau^*$) must satisfy the following Karush-Kuhn-Tucker conditions:
\begin{align}
    \vp^* - \vz - \bm{\mu}^* + \bm{\nu}^* + \tau^*\1 = \bm{0}, \label{eq:kkt1}                                               \\
    \bm{0} \le \vp^* \le \1,\quad \1^\top\vp^* =k, \quad \bm{\mu}^* \ge \bm{0}, \quad \bm{\nu}^* \ge \bm{0}, \label{eq:kkt2} \\
    \mu^*_ip^*_i = 0,\quad v^*_i(p^*_i - 1) = 0,\quad \forall i \in [K]. \label{eq:kkt3}
\end{align}

\paragraph{Case I} If for $i \in [K]$ we have $0 < p^*_i < 1$, then from (\ref{eq:kkt3}) we must have $\mu^*_i=0$ and $\nu^*_i=0$, which from (\ref{eq:kkt1}) we must have $p^*_i =  z_i - \tau^*$, i.e., $z_i-1 < \tau^*$.

\paragraph{Case II} If for $i \in [K]$ we have $\mu^*_i > 0$, then from (\ref{eq:kkt3}) we must have $p^*_i=0$ and $\nu^*_i=0$, which from (\ref{eq:kkt1}) we must have $\mu^*_i = \tau^* - z_i > 0$, i.e., $z_i \le \tau^*$.

\paragraph{Case III} If for $i \in [K]$ we have $\nu^*_i > 0$, then from (\ref{eq:kkt3}) we must have $p^*_i=1$ and $\mu^*_i=0$, which from (\ref{eq:kkt1}) we must have $\nu^*_i = z_i - 1 - \tau^* > 0$, i.e., $z_i - 1\ge \tau^*$.

From \textbf{Case I, II, III} and (\ref{eq:kkt2}) we obtain
\begin{align}
    & \vp^* = \max(\min(\vz - \tau^*, 1), 0)\label{eq:sparse_k_sol},\\
     \text{and}\quad & \sum_{j\in S(\vz)} (z_j - \tau^*) + |F(\vz)| =k.\label{eq:lag_sol}
\end{align}

Rearrange (\ref{eq:lag_sol}) then we will get (\ref{eq:sparse_k_proof_part1}). 

\end{proof}

Let $z_{(1)} > z_{(2)} > \dots > z_{(m)}$ be the sorted coordinates of $\vz$. We can define $u^*=|F^*(\vz)|$ and $w^*=|S^*(\vz)| + |F^*(\vz)|$, so we have
\begin{align}
    z_{(u^*)} \ge \tau(\vz)+1 >  z_{(u^*+1)} &  & z_{(w^*)} > \tau(\vz) \ge z_{(w^*+1)} \label{eq:sol_constraint}
\end{align}
Consequently, (\ref{eq:sparse_k_proof_part1}) can be rewritten as
\begin{align}
    \tau(\vz)= \frac{\sum_{u^* < j \le w^*} z_{(j)} + u^* - k}{ w^* - u^* },\label{eq:run_sol_appendix}
\end{align}

\subsection{Get Algorithm~\ref{alg:line_eval_sol}}

The exact solution of (\ref{eq:run_sol_appendix}) can be evaluated by searching $(u,w)$ in the descending order that satisfies (\ref{eq:sol_constraint}). The descending order is given by the step functions with respect to an incremental threshold $ z_{(m)} - 1 < \beta \le z_{(1)}$,
\begin{align}
    u(\beta) = \max\{j | z_{(j)} \ge \beta + 1\} &  & w(\beta) = \max\{j | z_{(j)} > \beta\}.
\end{align}

Note that there are at most $2m$ distinct $(u, w)$ pairs, corresponding to the values of $\beta$ that may trigger change of either $u^*$ or $w^*$, i.e., $\{z_1,\dots, z_m\} \cup \{z_1 - 1,\dots, z_m-1\}$.

\subsection{Gradient}

The $\sparsek$ operator is differentiable everywhere except at splitting points causing changes to set $S^*(\vz)$ or $F^*(\vz)$. Note that we have $j\in S^*(\vz) \Leftrightarrow \tau(\vz) < z_j < \tau(\vz) + 1$. Then, from Equation (\ref{eq:sparsek_sol_main}) and Equation (\ref{eq:sparse_k_proof_part1}), we have
\begin{align}
    \frac{\partial \sparsek_i(\vz)}{\partial z_j} = \left\{\begin{aligned}
                                                                & \delta_{ij} - \frac{1}{|S^*(\vz)|} &  & \text{if } z_i \in S^*(\vz), \\
                                                                & 0                                  &  & \text{otherwise,}
                                                           \end{aligned}\right.
\end{align}
where $\delta_{ij}=1$ if $i=j$ and 0 otherwise. % 
Then, the Jacobian matrix $\bJ(\vz)$ and Jacobian-vector product (JVP) for a given vector $\vv$ are given by
\begin{align}
    &\bJ(\vz) = \text{Diag}(\vs) - \vs\vs^\top / |S^*(\vz)| \\
    &\bJ(\vz)\cdot \vv = \vs\odot(\vv - \hat{v}\bm{1}), \text{with } \hat{v} = \frac{\sum_{j\in S^*(\vz)} v_j}{|S^*(\vz)|}\label{eq:jvp}
\end{align}
where $\vs$ is an indicator vector taking $1$ if $i \in S^*(\vz)$ and $0$ otherwise, and $\odot$ is the Hadamard product.
Note that $\sparsek$ has the same form of gradients, Jacobian and jacobian-vector product (JVP) as $\textsc{SparseMax}$~\cite{Martins2016FromST} but with a different definition of $S^*(\vz)$.

\section{Other Technical Details}

\subsection{Combine SparseK with Low-rank Linear Attention}
\label{sec:combine_with_linear_attn}

For simplicity, we consider the attention of one query, $\vq_i$. In \citet{Chen2021ScatterbrainUS}, each attendable position $j$ ($j \leq i$) is attended to via either sparse attention or linear attention. This hard strategy resembles top-$k$ attention. In our $\sparsek$ attention, we employ soft gating, denoted as $\vm^{sparsek}$. Consequently, the combination of $\sparsek$ and linear attention forms an interpolation between them, modulated by $\vm^{sparsek}$:
\begin{align}
    \vo_i = \sum_j \frac{\left(\vm_{sparsek,j}\exp\left(\vq_i^\top \vk_j\right) + \left(1-\vm_{sparsek,j}\right) \phi(\vq_i)^\top\phi(\vk_{j'})\right)\vv_j }{\sum_{j'}\vm_{sparsek,j}\exp(\vq_i^\top \vk_{j'}) + (1-\vm_{sparsek,j}) \phi(\vq_i)^\top\phi(\vk_{j'})}
\end{align}

We can use this notation to express other kinds of combination: (i) Linear+SW means to use a local gating $\vm_{local}$ where $\vm_{local,j} = 1$ if $i-j \le w$ and 0 otherwise. (ii) SparseK+Linear+SW means using a concatenation of $\vm_{local}$ and $\vm_{sparsek}$ as mentioned in Section~\ref{sec:combine_sparsek_and_local}. 

In this work, we adopt the feature map $\phi(\vx) = \text{elu}(\text{head-wise-linear}(\vx)) + 1$ based on the methodology proposed by \citet{Katharopoulos2020TransformersAR}. The additional head-wise linear transformation is introduced to mitigate the dilemma of $\bW_q$ and $\bW_k$ having to perform two types of attention. This adjustment introduces less than 1\% additional parameters. However, using linear attention can introduce significant overhead during training because $\text{head-wise-linear}(\vx)$ consumes substantial memory that needs to be stored in the computation graph. In contrast, the combination of $\sparsek$ and sliding window (SW) techniques introduces minimal overhead since the same $\bQ$, $\bK$, and $\bV$ are employed in both kinds of attention.

In our early experiments, we also considered the more recent Hedgehog feature map~\cite{Zhang2024TheH}. We observed slightly better performance compared to the aforementioned feature map. However, the Hedgehog feature map requires significantly more computational resources due to doubling the hidden dimension. While it is possible to fuse the feature map and linear attention within a single kernel, we leave this for future work.

Recently, Infini-attention~\cite{Munkhdalai2024LeaveNC} was proposed to integrate local attention and linear attention through an input-independent gating mechanism. We contend that our methods exhibit a superior expressive capability because our gating mechanism is data-dependent.

\paragraph{The numerical explosion issue}

Without normalization, we have observed that the gate $(1-\mathbf{m}_{sparsek,j})$ is trained to put all credits away from linear attention part $\phi(\mathbf{q}_i)^\top\phi(\mathbf{k}_j)$ (i.e., set $(1-\mathbf{m}_{sparsek,j})$ to zero) because standard dot-product attention is generally better than linear attention, and in contrast, the linear attention part is trained to increase $\phi(\mathbf{q}_i)^\top\phi(\mathbf{k}_j)$ significantly to survive. The score normalization breaks the competition and makes the training feasible.

\subsection{Efficient GPU Kernel of SparseK Attention}
\label{seq:eff_impl}

In this section, we discuss optimizations for obtaining practically efficient sparse attention. We discuss a few challenges here and present our solution briefly. We measure time and memory cost on one NVIDIA A6000 GPU when $n=8192, k=1024, h=4, d=64$, and $G=128$ to show the effectiveness of each optimization.

\paragraph{High memory cost} A naive implementation is to select contexts before doing attention. However, each selection produces $\hat{\bK}_i,\hat{\bV}_i\in\sR^{k\times d}$, resulting in a memory cost of $2nkd$ for all selections. The cost is significant considering a reasonable size of selected contexts, such as $k=100$.

\underline{\textit{Solution}} \quad We implement a fused Triton kernel to load needed key-value pairs on demand within FlashAttention~\cite{Dao2022FlashAttentionFA}. By doing so, the memory cost is reduced from 8192 MB to 4.125 MB.

\paragraph{Underuse of hardware} Our method requires $n$ matrix-vector multiplications (MVM) of $\vq_i\hat{\bK}_i$ instead of one matrix-matrix multiplication (MMM) of $\bQ\bK^\top$ in pure self-attention. This fails to capitalize on the hardware-optimized matrix-matrix multiplication in modern GPUs.

\underline{\textit{Solution}} \quad We group $G$ successive queries to do attention jointly. This is efficient because the majority of selections remain unchanged in successive steps. With grouping, $n$ MVM are transformed into $\frac{n}{G}$ MMM and hence hardware can be fully utilized. By doing so, the running time is reduced from 3.09 ms to 195 us, where about 2 ms is due to the faster matrix-matrix multiplications, and the reset is credited to less IO.

\paragraph{IO-inefficiency} Storing and using selection scores $\vm_{sparsek}$ has an IO complexity of $O(nk)$.
Besides,
As $\textsc{SparseMax}$, jacobian-vector product (JVP) computation of $\sparsek$ requires multiple read/write operations.

\underline{\textit{Solution}} \quad Our approach involves storing only the unnormalized scores $\vu\in\sR^{n}$ and the threshold $\bm{\tau}\in\sR^{n}$. Then, $\vm_{sparsek}$ is computed dynamically as needed, akin to the re-computation technique in FlashAttention, which trades slow IO with fast computation. Regarding gradients, we extend the backward kernel of FlashAttention to include IO-aware JVP computation. Furthermore, we introduce additional IO optimizations for grouped queries, achieved by reducing intermediate results in a group instead of using a computing-then-reducing pipeline. By doing so, the running time is reduced from 12.07 ms to 1.46ms and the memory cost of backward pass is reduced from 192.13 MB to 13.53 MB.

\subsection{Practical Implementation of Algorithm~\ref{alg:multi_step}}

Algorithm~\ref{alg:multi_step} scans the sequence of scores $\vu$ and incorporates heaps, making it unsuitable for GPUs. Therefore, we implement the algorithm on the CPU, where it performs efficiently, even with millions of input elements. However, this raises the concern that frequent data transfers between the CPU and GPU might degrade efficiency. To address this, we leverage the asynchronous execution property of GPU kernels to conceal the data transfer overhead behind other large GPU kernels.

\section{Additional Experimental Results}
\label{sec:additional_results}

\subsection{Language Modeling from Scratch}
\label{sec:more_results_from_scratch}

For all methods, we use the AdamW optimizer~\cite{Loshchilov2017FixingWD} with a learning rate of $6 \times 10^{-4}$ and a weight decay of 0.1. We apply gradient clipping with a threshold of 1.0. Our learning rate follows a cosine schedule, incorporating 100 warm-up steps from an initial learning rate of $1 \times 10^{-6}$. The batch size is set to 512K tokens. 

We utilize only those documents whose lengths are equal to or exceed the training context length, ensuring that no data packing is applied. This approach simplifies data preprocessing because the incremental selection of $\sparsek$ on two unrelated sentences is not meaningful. There are more sophisticated strategies available, such as the similarity-based batching~\cite{Zhong2024LoryFD}, which we plan to explore in future work. For evaluation, we compute perplexity on a held-out set using a sliding window approach without overlap. 

For GLA,  we adjusted the settings of its 340M architecture as follows: the hidden dimension was decreased from 1024 to 768, the number of layers was reduced from 24 to 12, and the number of heads was decreased from 4 to 2. For RetNet, the modifications were similar to those of the GLA, except that the number of heads was reduced to 3. For GLA and RetNet, we  use the same training hyperparameters (such as learning rate scheduler and weight decay) as in our experiments of Transformers rather than the official configuration~\cite{Yang2023GatedLA}. We found that our configuration leads to slightly better results. For Hash attention, we normalize $\bK$ as suggested in \citet{Pagliardini2023FasterCA}. Table~\ref{tab:hp_ppl_from_scratch} lists all used sparsity configurations, and figure~\ref{fig:more_reuslts_from_scratch} plots perplexity achieved by each model relative to the average (or maximum, if applicable) number of attended KV pairs per query.

We use the flash-linear-attention package~\cite{yang2024fla} for our linear attention related experiments, including GLA, RetNet, Linear+SW and SparseK+Linear+SW. We use the xformers~\cite{xFormers2022} package for our Fixed and BigBird experiments. We use the open-sourced code of \citet{Pagliardini2023FasterCA} for our Hash and Random experiments.

\begin{table}[H]
    \caption{Sparsity configurations used to produce Table~\ref{tab:ppl_from_scratch} and Figure~\ref{fig:more_reuslts_from_scratch}. \# denotes \textit{the number of}.}
    \label{tab:hp_ppl_from_scratch}
    \vskip 0.15in
    \centering
    \resizebox{0.98\columnwidth}{!}{%
    \begin{tabular}{cc|c|c|c} \toprule
      \multirow{2}{*}{\textbf{Model}} & \multirow{2}{*}{\textbf{Hyperparameter}}  & \multicolumn{3}{c}{\textbf{Training Context Length}}  \\
       & & 1024 & 4096 & 8192 \\\midrule
       SW and Full & window size & 256, 512 & 256, 512, 1024 & 256, 512, 1024 \\\midrule
       SparseK+SW& window size / $k$ & \multicolumn{3}{c}{128/128, 256/256, 512/512} \\\midrule
       Fixed & \# local/global & 12/4, 24/8 & 24/1, 24/2, 24/8 & 24/8 \\\midrule
       BigBird & \# global/random/local & 2/4/12, 4/4/24 & \multicolumn{2}{c}{2/4/12, 4/4/24, 8/8/24}  \\\midrule
       % Random & sparse ratio & 0.6 & 0.75 & 0.85 \\\midrule
       Hash & num clusters & 2, 4 & 8, 16 & 8, 16 \\\bottomrule
    \end{tabular}%
    }
\end{table}

\begin{figure}[H]
    \centering
    \includegraphics[width=\textwidth]{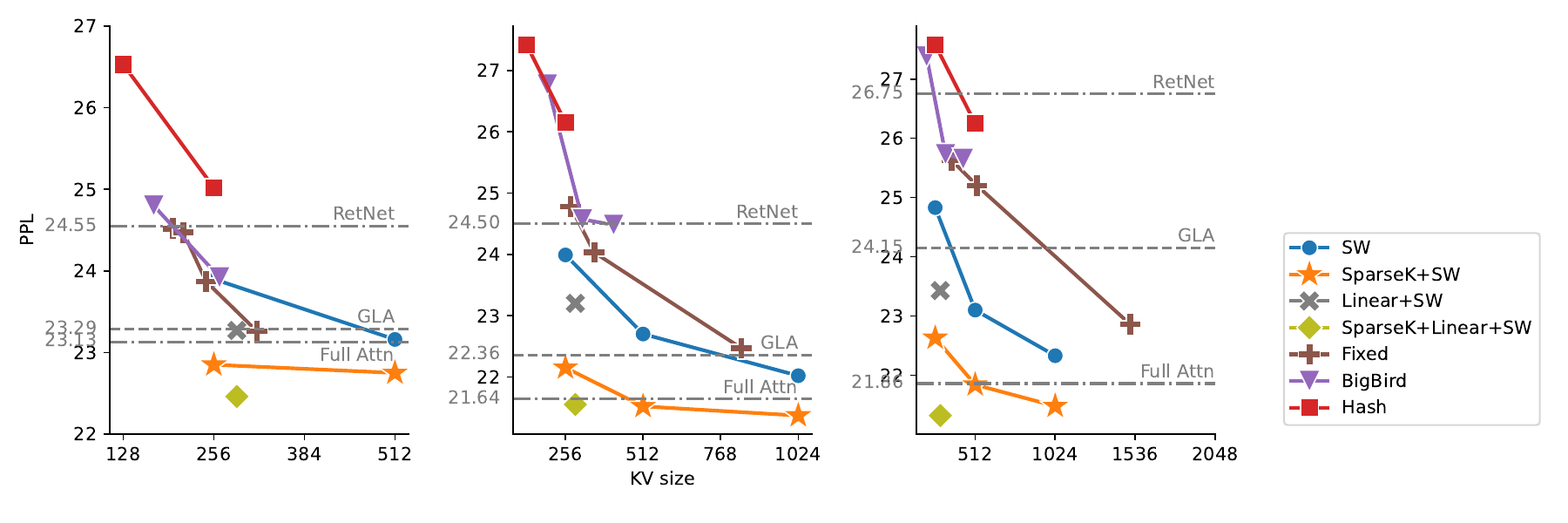}
    \caption{Training from scratch.}
    \label{fig:more_reuslts_from_scratch}
\end{figure}

\subsection{Fine-tuning Existing Models}
\label{sec:ft_hp}

For all methods, we use the AdamW optimizer with a weight decay of 0.1. We apply gradient clipping with a threshold of 1.0. Our learning rate follows a cosine schedule, incorporating 100 warm-up steps from an initial learning rate of $10\%$ of the peak learning rate, which is $3\times 10^{-4}$ for Pythia 160M, $1.5\times 10^{-4}$ for Pythia 410M and $1\times 10^{-4}$ for TinyLlama 1.1B. The batch size is set to 1M tokens.

For Pythia models, we use packing to generate training samples of the target length from a uniformly sampled subset of SlimPajama. For evaluation, we compute the perplexity on a held-out set using a sliding window approach with a step size of 512. For TinyLlama models, we utilize only those documents whose lengths are equal to or exceed the training context length in the up-sampled SlimPajama dataset~\citet{Fu2024DataEF}. For evaluation, we compute perplexity on a held-out set using a sliding window approach without overlap.

\subsection{More Downstream Task Evaluation on Fine-tuned Models}

We present the results of tasks that do not require long-context modeling in Table~\ref{tab:lm-eval}. These results were generated using the lm-eval-harness package~\cite{eval-harness}. Although $\sparsek$+SW still outperforms $SW$, a comparison with the original model indicates that fine-tuning on long data somewhat diminishes performance on short data~\cite{Fu2024DataEF}. This trade-off has been observed in other studies, prompting further research to address this issue.

\begin{table}[H]
    \caption{Results evaluated using lm-eval-harness.}
    \label{tab:lm-eval}
    \vskip 0.15in
    \centering
    \resizebox{0.98\columnwidth}{!}{%
        \begin{tabular}{l|cc|cccccccccccc} \toprule
            \textbf{Model}       & \textbf{Wiki.}   & \textbf{LMB.}    & \textbf{LMB.}  & \textbf{PIQA}  & \textbf{Hella.}      & \textbf{Wino.} & \textbf{ARC-e} & \textbf{ARC-c}       & \textbf{CoPA}  & \textbf{OBQA}        & \textbf{SciQA} & \textbf{BoolQA} \\
                                 & ppl $\downarrow$ & ppl $\downarrow$ & acc $\uparrow$ & acc $\uparrow$ & acc\_norm $\uparrow$ & acc $\uparrow$ & acc $\uparrow$ & acc\_norm $\uparrow$ & acc $\uparrow$ & acc\_norm $\uparrow$ & acc $\uparrow$ & acc $\uparrow$  \\\midrule
            Original             & 14.70            & 7.11             & 57.58          & 73.12          & 58.99                & 58.88          & 61.36          & 31.83                & 74.00          & 34.40                & 88.70          & 62.97           \\\midrule
            SW & 14.57 & 7.23 & 58.16 & 73.07 & 57.66 & 58.41 & 60.14 & 29.86 & 73.00 & 35.40 & 87.60 & 61.28 \\
            SparseK+SW & 14.02 & 7.40 & 58.33 & 72.58 & 57.52 & 59.27 & 59.01 & 29.35 & 77.00 & 34.60 & 87.90 & 61.83
            \\\bottomrule
        \end{tabular}%
    }
\end{table}

\subsection{Compare with Memory-efficient Attention Method}

We compare our method with \citet{Anagnostidis2023DynamicCP} in Table~\ref{tab:compare_with_DCP}, which uses a pairwise criterion to prune the KV cache based on the current query. Following \citet{Anagnostidis2023DynamicCP}, we use the GPT2 small architecture and train models with Wikipedia 20220301.en~\cite{wikidump} and BookCorpus~\cite{Zhu_2015_ICCV} datasets. DCP achieves a slightly better perplexity than ours, matching the stronger expressiveness of their method than our query-independent selection. However, their method costs a much longer training time due to the heavy pairwise criterion, making it hard to adopt in training on long documents. Meanwhile, our method remains efficient in this short context length experiments and achieves similar performance gains.

\begin{table}[H]
\caption{Perplexity evaluation on the Wikipedia 20220301.en and BookCorpus held-out set. DCP:~\cite{Anagnostidis2023DynamicCP}. SW: sliding window attention. The training context length is 1024. We set $\gamma=0.03$ to DCP, $w=256$ to SW, $k=w=128$ to our $\sparsek$+SW. This results in a similar sparsity among these models. We also report the training time relative to full attention.}
\label{tab:compare_with_DCP}
\vskip 0.15in
\centering
\begin{tabular}{c|ccc} \toprule
    & \textbf{DCP} & \textbf{SW} & \textbf{$\sparsek$+SW} \\\midrule
    \textbf{PPL} & 17.91 & 18.36 & 18.06 \\\midrule
    \textbf{Relative Training Time} & $264\%$ & $97\%$ & $98\%$ \\\bottomrule
\end{tabular}
\end{table}

\subsection{Ablation Study}

Previously, we choose $\sparsek$+SW as the default model. In this section, we conduct ablation study to investigate the performance of $\sparsek$-only model and effectiveness of learnable selection. We use the Pythia-160M architecture. The context length is 8192. All models are set to have a sparsity ratio similar to SW with a window size of 1024. We randomly initialize all parameters. All models are trained with 5000 steps.

Table~\ref{tab:ablation} shows the results. We first study attention types. SparseK-only outperforms sliding window attention. A combination of these two attention leads to better performance. Then, we study how to select. We consider two baselines: random selection and unlearnable selection which is parameterized as in Section~\ref{sec:faster_and_more_stable_training} \textit{initialization to mimic attention score}. The two baseline obviously underperforms SparseK-only, validating the effectiveness of our learnable selection.

\begin{table}[H]
    \centering
    \caption{Ablation study}
    \label{tab:ablation}
    \vskip 0.15in
    \begin{tabular}{c|ccccc} \toprule
       \textbf{Model}  & $\sparsek$+SW & $\sparsek$-only & \makecell{$\sparsek$-only \\random} & \makecell{$\sparsek$-only \\unlearnable} & SW \\\midrule
        \makecell{\textbf{PPL}\\w/ slope} & 27.38 &  27.91 &  33.72 & 36.20 & \multirow{2}{*}{28.12} \\\cmidrule{1-5}
        \makecell{\textbf{PPL}\\w/o slope} & 27.41 & 40.48 & 54.43 & 38.55 \\\bottomrule
    \end{tabular}

\end{table}

\subsection{Balance Between Selection Size and Sliding Window Size}

In Table~\ref{tab:analysis_k_w}, we present a comparison of various ratios of $k$ and $w$. Pythia 410M is utilized as the base model, with a training context length of 16,384. The results indicate an optimal performance at approximately $k=w=1024$. Consequently, we set $k=w$ in all subsequent experiments for consistency and simplicity.

\begin{table}[H]
    \caption{Comparison on different ratios of $k$ and $w$ in $\sparsek$+SW. }
    \label{tab:analysis_k_w}
    \vskip 0.15in
    \centering
    \begin{tabular}{c|cccccccc}\toprule
       $k$  & 0  & 128 & 256 & 512 & 768 & 1024 &1280 & 1536\\
       $w$ & 2048 & 1920 & 1792 & 1536 & 1280 & 1024 & 768 & 512\\\midrule
        \textbf{PPL} & 13.73 & 13.65 & 13.64 & 13.62 &  13.59 & 13.59 & 13.61 & 13.61 \\\bottomrule
    \end{tabular}
\end{table}

\subsection{Multi-group Selection}
\label{sec:multi_head_selection}

It is straightforward to use head-specific selection or grouped selection; the only modification required is to employ separate scoring networks for each head or group. In Table~\ref{tab:multi_group}, we compare the performance of multi-group selection against single-group selection. We use Pythia 410M as the base model. The training context length is 4096. $w$ is set to 256. When utilizing two groups with \(k=256\), the worst-case memory budget amounts to \(256 \times 2 = 512\). Our findings indicate that even with a 50\% increase in the budget, amounting to 768  (where $k=384$), the multi-group selection fails to achieve the performance of single-group attention. Consequently, we have opted to use single-group selection in our subsequent experiments.

\begin{table}[H]
    \caption{Study multi-group selection.}
    \label{tab:multi_group}
    \vskip 0.15in
    \centering
    \begin{tabular}{c|ccc}\toprule
        \textbf{Num of groups} & 1 & 2 & 2 \\
        $k$ & 512 & 256 & 384 \\ \midrule
        \textbf{PPL} & 13.63 & 13.74 & 13.69 \\\bottomrule
    \end{tabular}

\end{table}

\subsection{Speed Benchmark}

Figure~\ref{fig:benchmark1} presents an sole comparison of the wall-clock time between our developed Triton kernel (with $k=512$ and $w=512$) and the Triton implementation of FlashAttention-2~\cite{Dao2023FlashAttention2FA}, across different input lengths. While our kernel initially shows slower performance than FlashAttention-2 on short inputs due to overhead associated with selecting contexts, it outpaces FlashAttention-2 as input lengths surpass 4096 and 8192 for forward only and forward+backward, respectively. This performance gain is attributed to our method's linear complexity. Additionally, using a straight-through estimator can reduce the IO operations required for reading $\vm_{sparsek}$, thereby providing further improvements.

\begin{figure}[H]
    \centering
    \includegraphics[width=0.5\textwidth]{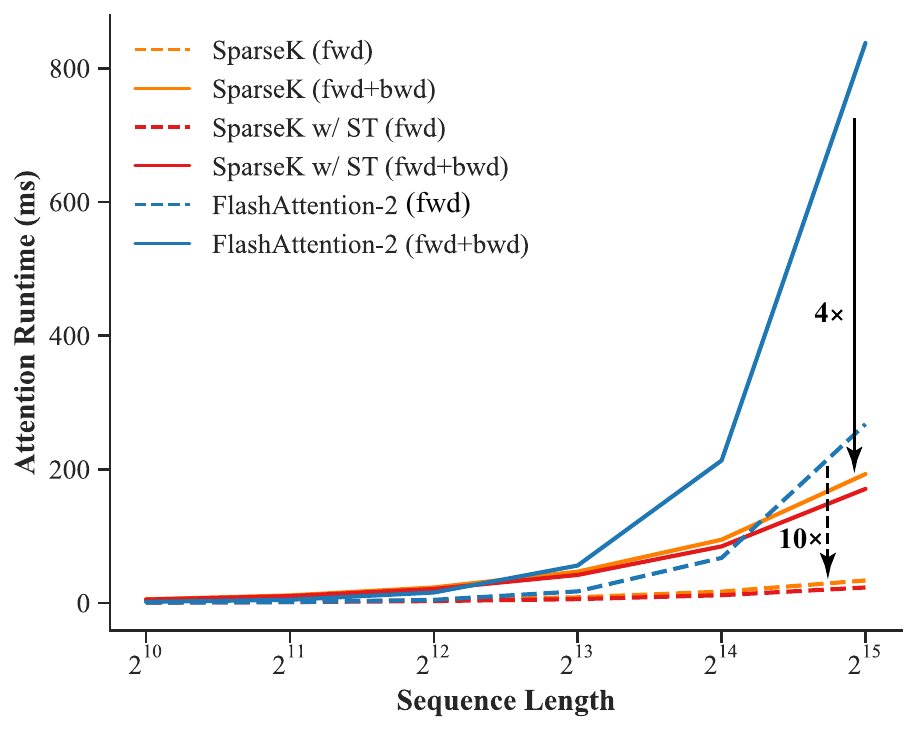}
    \caption{Benchmark the speed against FlashAttention-2. ST indicates the straight-through estimator}
    \label{fig:benchmark1}
\end{figure}

Figure~\ref{fig:time_to_reach_ppl} presents the times and steps required to achieve a specific perplexity. We use a pretrained Pythia-160m model with learning rate decay disabled. The results indicate that $\sparsek$+SW typically learns more quickly than the sliding window attention, achieving a better pareto-optimum in performance vs. efficiency.

\begin{figure}[H]
    \centering
    \includegraphics[width=\textwidth]{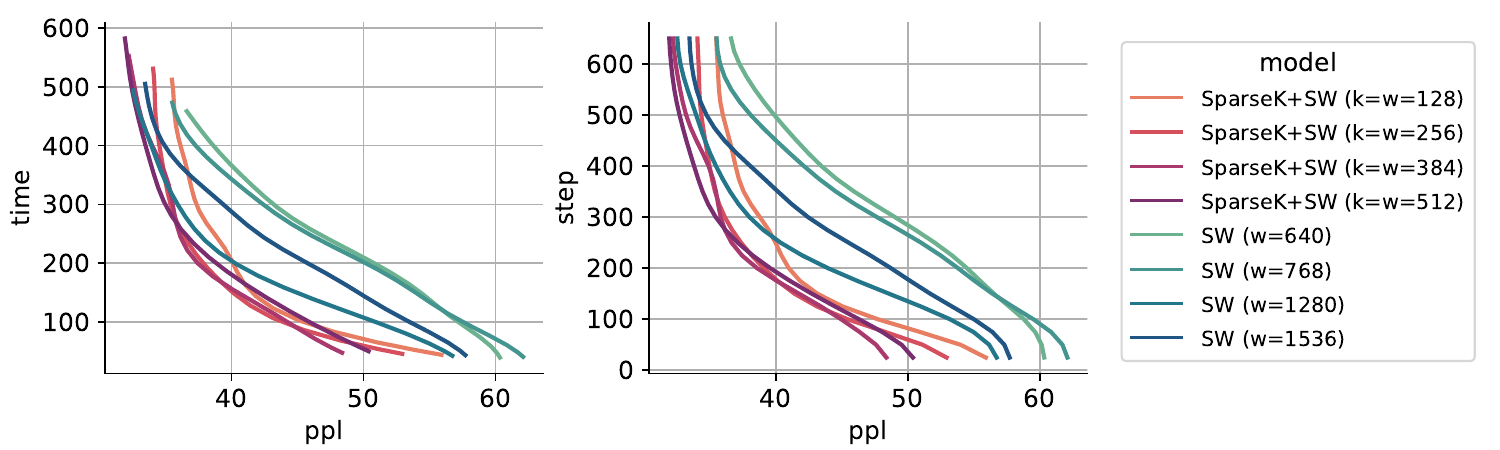}
    \caption{Time and steps used to reach ppls.}
    \label{fig:time_to_reach_ppl}
\end{figure}

In Table~\ref{tab:training_time}, we compare the training time when using different sparse attention. We use the Pythia-160M architecture.  The context length is 8192. All models are set to have a sparsity ratio similar to SW with a window size of 1024. All models are trained with 5000 steps.

\begin{table}[H]
    \caption{Relative training time (RTT) to complete a fixed-step training.}
    \label{tab:training_time}
    \vskip 0.15in
    \centering
    \begin{tabular}{c|ccccc}\toprule
       \textbf{Method}  & Full & SW & SparseK+SW & Random & Hash  \\\midrule
       \textbf{RTT}  & 100\% & 68.9\% & 83.2\% & 92.8\% & 113.2\% \\\bottomrule
    \end{tabular}
\end{table}

\subsection{Visualization}

\begin{figure}[H]
    \centering
    \includegraphics[width=\textwidth]{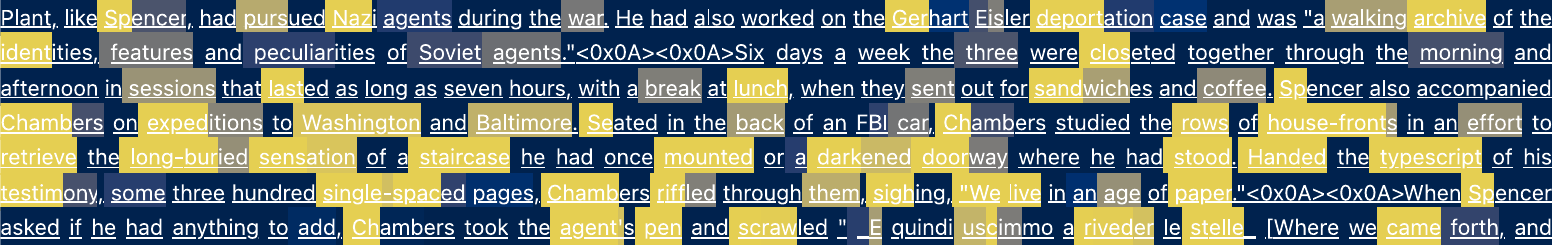}
    \caption{Visualization of the soft selection in an example sequence. Ligher background color means higher selection scores. We choose the 21st layer in a finetuned TinyLlama 1.1B $\sparsek$+SW model. To produce this visualization, we set $k=128, w=0$, which is different from the fine-tuning setting $k=512, w=512$, for a smaller figure size. }
    \label{fig:vis}
\end{figure}

\section{Limitations}
\label{sec:limitations}

This paper presents $\sparsek$ Attention, whose goal is to address both computational and memory efficiency challenges in long-range Transformer computing. We list limitations of our work below:
\begin{itemize}
    \item \textbf{Limited model size and context length} In this study, we validate the advantages of $\sparsek$ in settings involving models up to 1.1 billion parameters and context lengths up to 16,000 tokens. The primary reason for this limitation is our restricted computational resources. Despite these constraints, we consider a broad range of model sizes and context lengths to demonstrate the general applicability of our method. Nevertheless, with the advent of recent parameter-efficient fine-tuning techniques, it may be possible to experiment with our method on more limited devices. 
    \item \textbf{Decoder-only} We restrict our discussion to applying $\sparsek$ within Transformer decoders. the incremental evaluation capability of the $\sparsek$ operation demonstrates superior complexity compared to previous approaches. Nonetheless, the $\sparsek$ operation is also applicable in various other contexts, such as Transformer encoders and routing in mixture-of-expert models, where a top-k operation might be employed. The sparse output produced by the $\sparsek$ operation offers a notable advantage in these scenarios because it is close to the top-$k$ selection performed somewhere.
    \item \textbf{Text-only} We focus exclusively on text tasks. However, our method is not dependent on the input modality. Future research involving vision or speech could further substantiate the robustness of our method.
\end{itemize}

\section{Impact Statement}
\label{sec:Impact}

This paper presents $\sparsek$ Attention, whose goal is to address both computational and memory efficiency challenges in long-range Transformer computing. We believe our innovative attention mechanism can benefit both NLP and machine learning communities in constructing long-range foundation models. Specifically, we highlight the potential impacts of $\sparsek$ as follows:

\begin{itemize}
    \item \textbf{Efficient Long-Range Modeling.} First and foremost, the $\sparsek$ attention mechanism significantly reduces computational requirements compared to traditional self-attention mechanisms. By prioritizing a subset of key-value pairs,  $\sparsek$ attention effectively reduces the memory footprint without sacrificing model performance. This is particularly advantageous for resource-constrained environments and edge computing devices. Moreover, this innovation enhances training speed and convergence, contributing to more efficient model development and experimentation.

    \item \textbf{More powerful long-range pretrained models.} The differentiable $\sparsek$ operator facilitates gradient-based optimization, contributing to accelerated training speed. This is particularly advantageous for pretrained language models, where training on massive datasets can be time-consuming. Faster convergence and training speed enable researchers and practitioners to experiment with and refine models more efficiently. The efficiency gains provided by $\sparsek$ attention make it more scalable to work with massive long-sequence datasets, enabling researchers to harness the wealth of information available on the internet and other sources for training robust and contextually aware language models.

    \item \textbf{General Application to downstream tasks.} The proposed efficient self-attention mechanism can benefit a spectrum of NLP and machine learning downstream tasks, such as long-context document analysis and generation, Transformer-based long-form video analysis, \emph{etc.}
    Moreover, the $\sparsek$ attention mechanism is not limited to specific domains or tasks. Its adaptability makes it applicable across a wide range of natural language processing and machine learning applications, offering a versatile solution to improve efficiency and performance in diverse scenarios.
\end{itemize}

\end{document}